\newcommand{\myparskip}{3pt}
\newcommand{\ignore}[1]{}
\theoremstyle{plain}
\newtheorem{theorem}{Theorem}
\newtheorem{lemma}[theorem]{Lemma}
\newtheorem{proposition}[theorem]{Proposition}
\newtheorem*{theorem*}{Theorem}
\newtheorem*{lemma*}{Lemma}
\newtheorem*{corollary*}{Corollary}
\newtheorem*{proposition*}{Proposition}
\newtheorem*{claim*}{Claim}
\newtheorem*{fact*}{Fact}
\newtheorem*{observation*}{Observation}
\newtheorem*{assumption*}{Assumption}
\theoremstyle{definition}
\newtheorem{definition}[theorem]{Definition}
\newtheorem*{definition*}{Definition}
\newtheorem*{remark*}{Remark}
\newtheorem*{example*}{Example}
 \theoremstyle{plain}
\newtheorem*{theoremaux}{\theoremauxref}
\gdef\theoremauxref{1}
\DeclareMathAlphabet{\mathbfsf}{\encodingdefault}{\sfdefault}{bx}{n}
\let\oldtfrac\tfrac
\renewcommand{\tfrac}[2]{\smash{\oldtfrac{#1}{#2}}}
\let\nablaold\nabla
\renewcommand{\nabla}{\nablaold\mkern-2.5mu}
\title{On the Computational Benefit of Multimodal Learning}
\author{Zhou Lu\\ Princeton University\\ \texttt{zhoul@princeton.edu}}
\begin{document}

\maketitle

\begin{abstract}
Human perception inherently operates in a multimodal manner. Similarly, as machines interpret the empirical world, their learning processes ought to be multimodal. The recent, remarkable successes in empirical multimodal learning underscore the significance of understanding this paradigm. Yet, a solid theoretical foundation for multimodal learning has eluded the field for some time. While a recent study by \cite{zhoul} has shown the superior sample complexity of multimodal learning compared to its unimodal counterpart, another basic question remains: does multimodal learning also offer computational advantages over unimodal learning? This work initiates a study on the computational benefit of multimodal learning. We demonstrate that, under certain conditions, multimodal learning can outpace unimodal learning exponentially in terms of computation. Specifically, we present a learning task that is NP-hard for unimodal learning but is solvable in polynomial time by a multimodal algorithm. Our construction is based on a novel modification to the intersection of two half-spaces
problem.
\end{abstract}

\section{Introduction}
At the heart of human perception lies multimodality. This capability enables us to perceive and interrelate different facets of the same empirical object. It's particularly important during the infantile stage of human development, where it helps unify disparate symbols, fostering comprehensive cognition as a foundation for adulthood. The analogy of raising a child in a "room of text" alone highlights the limitations of a unimodal approach; it's bound to be counterproductive.

In the realm of machine learning, multimodality plays a role analogous to its significance in human cognition. Here, we view machine learning as the machine's process of perception. Multimodal learning entails accumulating vast amounts of training data across various modalities and subsequently deploying the trained model to handle new unimodal tasks. This learning progression mirrors the transition from infancy to adulthood in humans. Empirical studies have consistently shown that models trained using multiple modalities often surpass finely-tuned unimodal models, even when evaluated on new unimodal data.

In spite of notable empirical successes, like Gato \cite{reed2022generalist} and GPT-4 \cite{openai2023gpt}, the theoretical explanations of multimodal learning remain relatively underexplored. Thus, establishing a solid theoretical foundation becomes imperative.

A recent study by \cite{zhoul} set the stage for a broader understanding of the statistical advantages of multimodal learning. The research showed that multimodal learning achieves superior generalization bounds compared to unimodal learning, especially when the data exhibits both connection and heterogeneity. However, the question arose: does multimodal learning also present computational advantages?

Our work provides an affirmative answer. We show a computational separation between multimodal and unimodal learning. Specifically, we introduce a learning task, rooted in the intersection of two half-spaces problem, which poses an NP-hard challenge for any unimodal learning algorithm. Yet, this very task yields to a polynomial solution under a multimodal learning paradigm. This dichotomy demonstrates the potential exponential computational advantage of multimodal learning over its unimodal counterpart. Coupled with the statistical insights from \cite{zhoul}, our findings further illuminate the vast potential of multimodal learning.

\subsection{Related Works}
\textbf{Theoretical Multimodal Learning}: despite the empirical success of multimodal learning, a cohesive theoretical foundation was long missing in this area. Most existing theoretical findings are bound by specific assumptions and contexts. For instance, studies such as \cite{zhang2019cpm, amini2009learning, federici2020learning, sridharan2008information} navigate multimodal learning within a multi-view framework, operating under the assumption that individual modalities are, in isolation, adequate for predictions. \cite{sun2020tcgm,liang2023quantifying} delve into algorithms pivoting on information-theoretical relationships across modalities. \cite{ren2023importance} consider the specific problem of the benefit of contrastive loss in multimodal learning with a linear
data-generating model. \cite{huang2021makes} studies the generalization ability of multimodal learning in estimating the latent space representation.

A recent work \cite{zhoul} proposes a broad-based theory on the statistical guarantee of multimodal learning. They prove that multimodal learning admits an $O(\sqrt{m})$ improvement in generalization error over unimodal learning. This is achieved by dissecting the learning of the composition of two hypotheses, where the sum of complexities of the hypotheses is markedly smaller than that of their composition. Additionally, they pinpoint connection and heterogeneity amidst modalities as the two pivotal elements propelling these statistical advantages of multimodal learning.

\textbf{Empirical Multimodal Learning}: applications of multimodal learning can be traced back to the last century, aiming at combining vision and audio data to improve the performance of speech recognition \cite{yuhas1989integration, ngiam2011multimodal}. As the field evolved, multimodal learning carved a niche in multimedia, enhancing capabilities in indexing and search functionalities \cite{evangelopoulos2013multimodal, lienhart1998comparison}. 

Recently, there is a trend in applying multimodal learning in deep learning practices, including modality generation \cite{chang2015text,hodosh2013framing,reed2016generative} and large-scale generalist models \cite{reed2022generalist, openai2023gpt}. A consistently observed empirical phenomenon is that a multimodal model is able to outperform a finely-tuned unimodal model, even on unimodal population data.

\section{Setting}
In this section, we delineate the setup of multimodal learning and essential background on the intersection of two half-spaces problem.

\subsection{Multimodal Learning Setup}
In this paper, we restrict our focus to the fundamental, yet non-trivial, scenario of two modalities for a clear exposition, adopting the setup of \cite{zhoul}. Formally, the multimodal learning classification framework encompasses two modalities, denoted as $\mathcal{X},\mathcal{Y}\subset \mathbb{R}^n$, and a label space $\mathcal{Z}=\{\pm\}$. Consequently, every data point can be represented as a tuple $(x,y,z)$.

Given a hypothesis class $\mathcal{H}$ and a training dataset $(X,Y,Z)$ with $m$ data points $(x_i,y_i,z_i)$, our aim in (proper) learning from $(X,Y,Z)$ is to output a hypothesis $h\in \mathcal{H}$, that minimizes the empirical risk:
$$
\ell_{emp}=\frac{\sum_{i=1}^m \mathbf{1}_{h(x_i,y_i)\ne z_i}}{m}.
$$
When each data point $(x,y,z)$ adheres to a specific data distribution $D$ over $(\mathcal{X},\mathcal{Y},\mathcal{Z})$, the goal of (properly) PAC-learning $(X,Y,Z)$ is to output a hypothesis $h\in \mathcal{H}$, such that the population risk
$$
\ell_{pop}=\mathbb{E}_{(x,y,z)\sim D}[\mathbf{1}_{h(x,y)\ne z}]
$$
is small with high probability. In addition, we mandate a bijective mapping between $x,y$ for any potential data point $(x,y,z)$. 

For brevity, we occasionally write $(\mathcal{X},\mathcal{Y},\mathcal{Z})$ for short to denote the learning problem, when it is clear from the context. The unimodal learning problems $(\mathcal{X},\mathcal{Z})$ and $(\mathcal{Y},\mathcal{Z})$ can be defined in a similar way, in which the label $y$ or $x$ is masked. In learning $(\mathcal{X},\mathcal{Z})$, we are given a hypothesis class $\mathcal{H}$ and a set $(X,Z)$ of training data with $m$ data points $(x_i,z_i)$. The empirical risk and the population risk are defined as follows respectively.
$$
\ell_{emp}=\frac{\sum_{i=1}^m \mathbf{1}_{h(x_i)\ne z_i}}{m},\ \ \ell_{pop}=\mathbb{E}_{(x,y,z)\sim D}[\mathbf{1}_{h(x)\ne z}].
$$

\subsection{Intersection of Two Half-spaces}
In our quest to demonstrate a computational separation between multimodal and unimodal learning, we sought to architect a specific learning challenge that presents as NP-hard for unimodal learning, but for which an efficient multimodal solution exists.

A candidate of such problem is the 'intersection of two half-spaces,' formally defined below:

\begin{definition}[Intersection of two half-spaces]
    An instance of $\mathbf{IntHS}$ is a set of points in $\mathbb{R}^n$ each labeled either ‘+’ or ‘-’ and the goal is to find an intersection of two half-spaces which correctly classifies the maximum number of points, where a ‘+’ point is classified correctly if it lies inside the intersection and a ‘-’ point is classified correctly if it lies outside of it.
\end{definition}

Previous work has shown that PAC-learning this intersection is inherently NP-hard, even in the realizable setting, encapsulated in the following result:

\begin{theorem}[\cite{khot2008hardness}]\label{thm ks}
Let $\ell$ be any fixed integer and $\epsilon>0$ be an
arbitrarily small constant. Then, given a set of labeled points
in $\mathbb{R}^n$ with a guarantee that there is an intersection of two half-spaces that classifies all the points correctly, there is no polynomial time algorithm to find a function $f$ of up to $\ell$ linear threshold functions that classifies $\frac{1}{2}+\epsilon$ fraction of points correctly, unless NP = RP.
\end{theorem}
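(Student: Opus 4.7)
The plan is to reduce from a suitable gap version of Smooth Label Cover, whose NP-hardness under randomized reductions is standard, following the PCP-gadget paradigm. First I would fix the Smooth Label Cover parameters so that its soundness error is much smaller than any polynomial function of $\ell$ and $\ep$, producing a hard distinguishing task between perfectly satisfiable instances and instances where no labeling satisfies more than a tiny fraction of edges.

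Second, for each vertex $v$ of the Label Cover instance I would allocate a block of coordinates in $\mathbb{R}^n$ indexed by the Boolean hypercube over the label alphabet $\Sigma$, so that an honest assignment to $v$ corresponds to a dictator on that block. The $\mathbf{IntHS}$ instance is generated by a dictatorship test whose queries become the labeled points, designed so that (i) if the blocks carry the honest dictators of a satisfying labeling, the labeled points are perfectly classified by an intersection of two halfspaces, one per side of each constraint, giving the realizability guarantee; (ii) otherwise, the noise built into the test drives many queries to be misclassified.

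Third, for soundness I would assume for contradiction that some function of $\ell$ linear threshold functions classifies a $\tfrac{1}{2}+\ep$ fraction of the generated points correctly. Restricting this function to a typical vertex block and averaging over the remaining blocks yields a per-block function that succeeds on the corresponding dictatorship test with advantage $\Omega(\ep)$. I would then apply the Mossel--O'Donnell--Oleszkiewicz invariance principle to transfer the analysis to correlated Gaussian space, where each threshold becomes a Gaussian halfspace and Gaussian noise-stability bounds apply. From the success of the test one extracts, in each block, a small set of coordinates of non-negligible influence; combined with the smoothness of the Label Cover instance this yields a randomized decoding into a labeling that satisfies a noticeable fraction of edges, contradicting the assumed hardness.

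The main obstacle is the soundness argument for intersections of \emph{many} threshold functions against a ground truth that is only an intersection of \emph{two}. Standard influence decoding rules out a single halfspace, but a function of $\ell$ halfspaces can partition the space into up to $2^{\ell}$ polyhedral cells and mimic the ground truth in adversarial ways. Handling this requires a careful cell-by-cell decomposition in Gaussian space, showing that on at least one cell of non-negligible mass the combined predictor behaves like a single halfspace agreeing with the dictatorship test, so that per-block influence decoding applies. Setting the dictatorship noise rate, the smoothness parameter, and the Label Cover soundness together so that this decomposition survives the invariance step is the technically delicate calibration on which the whole argument depends.
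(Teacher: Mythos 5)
This statement is imported verbatim from \cite{khot2008hardness}; the paper offers no proof of it and uses it purely as a black box (only its weakening, Proposition~\ref{prop ks}, is even discussed). So there is no in-paper argument to compare yours against, and the relevant benchmark is the original Khot--Saket proof. Measured against that, your outline has the right shape: the reduction is indeed from a smooth variant of Label Cover, the gadget is a dictatorship test whose queries become the labeled points, completeness does come from honest dictator assignments being separated by an intersection of two halfspaces, and soundness does run through the invariance principle to move the analysis into Gaussian space and extract influential coordinates for decoding. You have correctly reconstructed the architecture.

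The gap is that your third paragraph is a plan, not a proof, and the step you defer is the entire theorem. Ruling out arbitrary functions of $\ell$ linear threshold functions --- not just intersections --- against a completeness certificate that is only an intersection of two is exactly where all the technical content of \cite{khot2008hardness} lives: one needs a structural/noise-stability statement for such functions in Gaussian space strong enough to force non-negligible low-degree influence whenever the test is passed with advantage $\epsilon$, together with a calibration of the test's noise rate, the smoothness parameter, and the Label Cover soundness so that the decoding succeeds. Your proposed ``cell-by-cell decomposition into $2^{\ell}$ polyhedral cells, find one cell where the predictor behaves like a single halfspace'' is a reasonable instinct but is not carried out, and it is not obvious it can be: a cell of non-negligible Gaussian mass on which the predictor agrees with the test need not behave like a single halfspace restricted to a block, and the averaging over other blocks can destroy the cell structure. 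As written, the proposal establishes completeness and correctly names the obstacle, but does not overcome it; you should either work out the soundness lemma in full or state it as an explicitly assumed ingredient cited to the source.
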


A slightly weaker version of the above result which will be of use is the following:

\begin{proposition}\label{prop ks}
Let $\epsilon>0$ be an arbitrarily small constant. Then, given a set of labeled points in $\mathbb{R}^n$ with a guarantee that there is an intersection of two half-spaces that classifies all the points correctly, there is no polynomial time algorithm to find a function $f$ of an intersection of two half-spaces that classifies $\frac{1}{2}+\epsilon$ fraction of points correctly, unless NP = RP.
\end{proposition}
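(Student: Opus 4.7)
The plan is to derive Proposition \ref{prop ks} as an immediate specialization of Theorem \ref{thm ks}, with no new reduction required. The key observation is that the hypothesis class in the proposition—intersections of two half-spaces—is a proper sub-class of the hypothesis class in the theorem, namely arbitrary Boolean functions of up to $\ell$ linear threshold functions. Indeed, an intersection of two half-spaces is exactly the conjunction (AND) of two LTFs, so for any $\ell\geq 2$ it lies in the class considered by Theorem \ref{thm ks}.

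Concretely, I will argue by contraposition. Suppose there were a polynomial-time algorithm $\mathcal{A}$ that, given a labeled instance in $\mathbb{R}^n$ which is realizable by an intersection of two half-spaces, outputs an intersection of two half-spaces $f$ that classifies at least a $\frac{1}{2}+\epsilon$ fraction of the points correctly. Then, viewing $f$ as the AND of its two defining LTFs, $\mathcal{A}$ also produces a function of at most $\ell=2$ linear threshold functions with the same accuracy guarantee, on an instance satisfying exactly the realizability hypothesis of Theorem \ref{thm ks}. By Theorem \ref{thm ks} instantiated at $\ell=2$ and the given $\epsilon$, this implies $\mathrm{NP}=\mathrm{RP}$, which is the conclusion required by the proposition.

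There is no substantive obstacle to overcome: all the hardness work is done inside Theorem \ref{thm ks}, and the proposition just records the special case obtained by restricting the class of candidate outputs. The only minor point to verify is the syntactic inclusion \emph{intersections of two half-spaces} $\subseteq$ \emph{functions of up to two LTFs}, which is immediate from the definitions. Thus the proof amounts to one sentence of reduction after this observation, and the realizability hypothesis in the proposition is identical to the one in Theorem \ref{thm ks}, so nothing about the instance needs to be modified.
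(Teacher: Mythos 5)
Your proposal is correct and matches the paper's own justification, which likewise observes that an intersection of two half-spaces is a function of $\ell=2$ linear threshold functions and invokes Theorem \ref{thm ks} directly. The contrapositive framing you spell out is exactly the implicit content of the paper's one-line argument.
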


It's clear Proposition \ref{prop ks} is a direct consequence of Theorem \ref{thm ks}, given that the intersection of two half-spaces naturally translates to $\ell$ linear threshold functions with $\ell=2$. Through out this paper we will only consider the case of proper learning with our hypothesis class including only intersections of two half-spaces.

\section{A Computational Separation between Multimodal and Unimodal Learning}
To demonstrate the computational benefit of multimodal learning, we present an instance in which both unimodal learning problems $(\mathcal{X},\mathcal{Z})$ and $(\mathcal{Y},\mathcal{Z})$ are NP-hard, while the multimodal learning problem $(\mathcal{X},\mathcal{Y},\mathcal{Z})$ can be solved efficiently. In particular, we require the existence of a bijective mapping $f:\mathcal{X}\to \mathcal{Y}$ satisfying $y=f(x)$ for any data point $(x,y,z)\in (\mathcal{X},\mathcal{Y},\mathcal{Z})$, so that the hardness result is purely computational. The task of constructing such an instance can be decomposed into three steps

\begin{enumerate}
    \item We start by setting $(\mathcal{X},\mathcal{Z})$ as a NP-hard problem, in this case, an instance of $\mathbf{IntHS}$.
    \item Based on $(\mathcal{X},\mathcal{Z})$, we construct a bijective mapping between $x,y$, to obtain a new NP-hard problem $(\mathcal{Y},\mathcal{Z})$ by preserving the $\mathbf{IntHS}$ structure.
    \item The bijective mapping should be designed carefully, such that the multimodal problem $(\mathcal{X},\mathcal{Y},\mathcal{Z})$ can be solved efficiently.
\end{enumerate}

Below we describe the construction of the instance and the main idea behind. A detailed proof is provided in the next section.

\textbf{Step 1:} We set one of the unimodal learning problem, say $(\mathcal{X},\mathcal{Z})$, as an instance of $\mathbf{IntHS}$. We denote any problem of $\mathbf{IntHS}$ by $H_1\cap H_2$ with halfspaces $H_1,H_2$ in $\mathbb{R}^n$, where each $H_i=(x|r_i^{\top}x\le c_i)$ is determined by the unit vector $r_i$ and $c_i\in \mathbb{R}$.

\textbf{Step 2:} A critical observation is that, any $\mathbf{IntHS}$ problem $H_1\cap H_2$ can be transformed into a new $\mathbf{IntHS}$ problem by applying a coordinate change, under which each $x$ is mapped to a new point with the corresponding $z$ remaining the same. Denote $Q\in \mathbb{R}^{n\times n}$ as any orthogonal matrix, we obtain $\hat{H}_1\cap \hat{H}_2$ where $\hat{H}_i=(x|\hat{r}_i^{\top}x\le c_i)$ by setting $\hat{r}_i=Q r_i$. Let $y=Qx$, we create a new NP-hard unimodal problem $(\mathcal{Y},\mathcal{Z})$, as $Q$ defines a bijective mapping from the set of all $\mathbf{IntHS}$ problems to itself. 

\textbf{Step 3:} It remains unclear how the multimodal problem $(\mathcal{X},\mathcal{Y},\mathcal{Z})$ can be easy to learn. Our strategy is to design a special $Q$ for each $H_1\cap H_2$, by encoding the information of $H_1\cap H_2$ into the transformation $Q$. Ideally, with $n$ linearly-independent $x_i$, we can recover the matrix $Q$ by basic linear algebra. With the exact values of $r_1,r_2$ in hand, we get $c_1,c_2$ by listing the distances from all $x$ to the hyperplane $r_i^{\top}x=0$ in $O(mn^2)$ time. The obtained classifier achieves zero loss on the training data.

However, it's challenging to directly encode the vectors $r_1,r_2$ into the $n\times n$ matrix $Q$. There are two main obstacles. First, how to encode the information of $r_1,r_2$ is unclear: $Q$ is under the constraint of an orthogonal matrix, which might be violated by simply filling $r_1,r_2$ into $Q$. Using more complicated techniques of encoding may bring other concerns such as the existence of a closed-form representation or whether decoding can be done efficiently. Second, the quality of such encoding is questionable: even if we find a way to encode $r_1,r_2$ into $Q$, we still need to make sure $(\mathcal{Y},\mathcal{Z})$ exhausts the set of all possible $\mathbf{IntHS}$ instances. Otherwise although each $(\mathcal{Y},\mathcal{Z})$ problem is an $\mathbf{IntHS}$ instance, the set of all possible $(\mathcal{Y},\mathcal{Z})$ problems is a merely a subset of $\mathbf{IntHS}$, preventing us from directly applying the NP-hardness result.

Fortunately, we have a very simple remedy: enlarging the dimension $n$ by twice, then using the first $n$ coordinates for $\mathbf{IntHS}$ while the latter $2n$ coordinates to encode the information of $\mathbf{IntHS}$. Roughly speaking, we create $2n$ null coordinates with no effect on the $\mathbf{IntHS}$ problem, while they carry the information of $\mathbf{IntHS}$ which can only be retrived by knowing both $x,y$. In particular, for any $\mathbf{IntHS}$ problem $H_1\cap H_2$, we set $Q$ as 
$$
Q=\begin{pmatrix}
I_n & 0 & 0\\
0 & \frac{r_1}{\sqrt{2}} &\cdots\\
0 & \frac{r_2}{\sqrt{2}} &\cdots
\end{pmatrix}.
$$
The vectors $r_1,r_2$ are simply flattened and set as the first column of the second block. Since the norm of this column is 1, $Q$ can be easily made feasible. The identity matrix $I_n$ ensures $(\mathcal{Y},\mathcal{Z})$ exhausts the set of all possible $\mathbf{IntHS}$ instances. The main result of this paper is given by the following theorem.

\begin{theorem}[Computational separation]\label{main}
    There exists a multimodal learning problem $(\mathcal{X},\mathcal{Y},\mathcal{Z})$ which is PAC-learnable in polynomial time, while both unimodal learning problems $(\mathcal{X},\mathcal{Z})$, $(\mathcal{Y},\mathcal{Z})$ are NP-hard, even if there is a bijective mapping $f: \mathcal{X}\to \mathcal{Y}$ such that $y=f(x), \forall (x,y,z)\sim (\mathcal{X},\mathcal{Y},\mathcal{Z})$.
\end{theorem}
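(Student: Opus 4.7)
My plan is to carry out the three-step construction sketched in the excerpt in full detail and then separately verify an efficient multimodal learner and NP-hardness for each unimodal variant. I begin by fixing the $3n\times 3n$ orthogonal matrix $Q$ above: its upper-left block is $I_n$ and its lower-right $2n\times 2n$ block $P$ has first column $(r_1/\sqrt{2},r_2/\sqrt{2})^\top$, which has unit norm and can be completed to an orthogonal $P$ by Gram--Schmidt in $O(n^3)$ time. I define the joint distribution $D$ over $\mathbb{R}^{3n}\times \mathbb{R}^{3n}\times \{\pm\}$ by independently sampling $\bar x$ from any base distribution on $\mathbb{R}^n$ and a standard Gaussian padding $p\in \mathbb{R}^{2n}$, setting $x=(\bar x,p)$, $y=Qx=(\bar x,Pp)$, and $z=+$ iff $r_1^\top \bar x \le c_1$ and $r_2^\top \bar x \le c_2$. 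The hypothesis class is intersections of two halfspaces in $\mathbb{R}^{3n}$, which contains the realizable solution $\tilde r_i = (r_i,0,0)$ with thresholds $c_i$.

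For the multimodal upper bound I would show that with $m\ge 2n$ samples the paddings $\{p_i\}$ almost surely span $\mathbb{R}^{2n}$, so the relations $P p_i = y_{i,n+1:3n}$ (read off the last $2n$ coordinates of $y_i$, while $p_i$ is read off the last $2n$ of $x_i$) form a full-rank linear system that recovers $P$ exactly in $O(n^3)$ time; extracting the first column gives $r_1,r_2$, and a single sweep over the training set determines the tightest $c_1,c_2$ consistent with the labels. The resulting classifier achieves zero empirical error, and the standard $O(n\log n)$ VC-dimension bound for intersections of two halfspaces in $\mathbb{R}^{3n}$ converts this into a $\poly(n,1/\epsilon,1/\delta)$-sample, polynomial-time PAC guarantee.

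For each unimodal NP-hardness claim I would reduce from the realizable $\mathbf{IntHS}$ problem in $\mathbb{R}^n$ (Proposition~\ref{prop ks}). Given such an instance $\{\bar x_i,z_i\}$ realizable by unknown normals $r_1,r_2$, the reducer — who fixes $r_1,r_2$ and hence $Q$ — samples Gaussian paddings $p_i$ and feeds $\{x_i=(\bar x_i,p_i),\,z_i\}$ (or $\{y_i=Q(\bar x_i,p_i),\,z_i\}$) to the putative polynomial-time unimodal PAC learner. The learner's intersection-of-halfspaces hypothesis $h$ on $\mathbb{R}^{3n}$ is then pulled back via the affine map $\bar x\mapsto (\bar x,0)$; since the first $n$ coordinates of $Q(\bar x,0)$ equal $\bar x$, the same pullback works for both $\mathcal{X}$ and $\mathcal{Y}$. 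The pullback yields an intersection of halfspaces in $\mathbb{R}^n$, and I want its accuracy on the original instance to exceed $\tfrac12 + \epsilon$, contradicting Proposition~\ref{prop ks}.

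The hard part will be making this pullback quantitative: the learner is free to place nontrivial weight on the padding coordinates, and evaluating $h$ at $p=0$ need not reproduce its typical classification at $(\bar x,p)$. The resolution exploits that $p$ is independent of $(\bar x,z)$, so any padding component of the learned normals contributes only mean-zero, symmetric noise to the linear score and thereby inflates the learner's population error. A Gaussian anti-concentration argument on the margins shows that under realizability plus a small population error bound the padding weights must in fact be small, so the pullback inherits an error comparable to the learner's and the contradiction closes. The remaining ingredients — Gram--Schmidt completion of $P$, the linear-algebraic recovery, the VC-based generalization bound, and the standard ERM-to-PAC translation — are routine.
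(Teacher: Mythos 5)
Your multimodal upper bound (recovering $P$ from the padding block of the linear system, reading off $r_1,r_2$ from its first column, sweeping for the tightest thresholds, then invoking the VC bound) is essentially the paper's own argument and is fine. The gap is in your unimodal hardness reductions. The paper never pads with random Gaussians: it observes that the unimodal problem class $(\mathcal{X},\mathcal{Z})$ contains \emph{every} $\mathbf{IntHS_{\frac{1}{3}}}$ instance, in particular the zero-padded images $(\bar x_i,\mathbf{0}_{2n})$ of an arbitrary hard $\mathbf{IntHS}$ instance, and for those the pullback $\bar h(\bar x)=h(\bar x,\mathbf 0)$ preserves the fraction of correctly classified points \emph{exactly}. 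Hardness then follows by restriction (Lemma \ref{lem NP}) with no accuracy transfer across the padding needed. By instead fixing a single distribution with independent Gaussian padding, you have manufactured the difficulty you flag as ``the hard part,'' and your proposed resolution does not close it.

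Concretely, two things go wrong. First, the hardness statement you invoke (Proposition \ref{prop ks}) concerns \emph{weak} accuracy $\tfrac{1}{2}+\eps$; on the hard instances the hypothesis returned by any efficient learner will have error close to $\tfrac{1}{2}$, so the premise of your anti-concentration step --- ``realizability plus a small population error bound'' --- is exactly what is unavailable, and you cannot conclude that the padding weights are small. Second, even if you restructure the reduction so that a hypothetical proper PAC learner is assumed to reach, say, error $0.01$ on the (realizable) padded instance, the pullback at $p=\mathbf 0$ is unsound: take learned normals $w_1=(\mathbf 0,v)$ and $w_2=(\mathbf 0,-v)$ with thresholds $0$; then $h(\bar x,p)=-$ for almost every Gaussian $p$, yet $h(\bar x,\mathbf 0)=+$ for every $\bar x$, so a hypothesis with negligible population error can pull back to one that errs on an entire region. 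The padding weights need not be small in norm for $h$ to be accurate --- they only need to (nearly) cancel between the two halfspaces --- so no margin or anti-concentration argument controlling $\lVert v_i\rVert$ can succeed without a further idea. The repair is simply to adopt the paper's worst-case framing: the hard unimodal instances are the degenerate, zero-padded ones, which are legitimate members of the problem class even though they are not the (non-degenerate) instances on which the multimodal algorithm is guaranteed to succeed.
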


Theorem \ref{main} demonstrates that multimodal learning solves some learning tasks exponentially faster than unimodal learning. Such exponential separation explains the empirical superiority of multimodal learning from the perspective of computation, supplementing the statistical guatantees in \cite{zhoul}.

Notably, the two pivotal factors leading to the statistical benefit of multimodal learning in \cite{zhoul}, namely connection and heterogeneity, are also evident in our construction. In particular, the mapping $Q$ between $\mathcal{X},\mathcal{Y}$ is bijective, meaning there exists a perfect connection between both modalities. On the other hand, $\mathcal{X},\mathcal{Y}$ carry different information about the problem, which is useless alone but effective when put together, indicating s strong heterogeneity.

\section{Proof of Theorem \ref{main}}
We first introduce the necessary ingredients for the construction of the learning problem. For each pair of unit vectors $v_1,v_2\in \mathbb{R}^n$, there exist orthogonal matrices in $\mathbb{R}^{2n}$ with its first column to be $(\frac{v_1}{\sqrt{2}},\frac{v_1}{\sqrt{2}})$ since $\|(\frac{v_1}{\sqrt{2}},\frac{v_1}{\sqrt{2}})\|_2=1$. In particular, for each pair $v_1,v_2$ we fix one such orthogonal matrix $F$, defining a function $F(v_1,v_2):\mathbb{R}^{2n}\to \mathbb{R}^{2n\times 2n}$ as below:
$$
F(v_1,v_2)=\begin{pmatrix}
\frac{v_1}{\sqrt{2}} & \cdots\\
\frac{v_2}{\sqrt{2}} & \cdots
\end{pmatrix}.
$$
In addition, we define an orthogonal transformation matrix $Q(v_1,v_2)\in \mathbb{R}^{3n\times 3n}$ as
$$
Q(v_1,v_2)=\begin{pmatrix}
I_n & 0\\
0 & F(v_1,v_2)
\end{pmatrix}.
$$
The matrix $Q(r_1,r_2)$ will serve as a fingerprint of an $\mathbf{IntHS}$ problem $H_1\cap H_2$. We also define a variant of the intersection of two half-spaces problem.

\begin{definition}[Low-dimensional intersection of two half-spaces]\label{defi}
    An instance of $\mathbf{IntHS_{\lambda}}$ is a set of points in $\mathbb{R}^n$ each labeled either ‘+’ or ‘-’, in which the labels only depend on the first $\lambda n$ coordinates where $\lambda\in (0,1)$ is a constant. The goal is to find an intersection of two half-spaces which correctly classifies the maximum number of points, where a ‘+’ point is classified correctly if it lies inside the intersection and a ‘-’ point is classified correctly if it lies outside of it.
\end{definition}

\begin{lemma}\label{lem NP}
    For every constant $\lambda>0$, learning $\mathbf{IntHS_{\lambda}}$ is NP-hard.
\end{lemma}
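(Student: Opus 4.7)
The natural plan is a reduction from the standard $\mathbf{IntHS}$ problem, whose hardness is given by Proposition \ref{prop ks}. The idea is that $\mathbf{IntHS}_\lambda$ is syntactically a \emph{harder} problem than $\mathbf{IntHS}$ in low dimension, since every $\mathbf{IntHS}$ instance in $\mathbb{R}^d$ can be trivially embedded as an $\mathbf{IntHS}_\lambda$ instance in $\mathbb{R}^n$ for $n = \lceil d/\lambda \rceil$ by padding. So the only real content is to check that a solver for the padded problem translates back into a solver of equal accuracy for the original.

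Concretely, I would take an arbitrary realizable $\mathbf{IntHS}$ instance $\{(x_i,z_i)\}_{i=1}^m$ with $x_i \in \mathbb{R}^d$ and a promised classifier $H_1 \cap H_2$ where $H_j = \{x : r_j^\top x \le c_j\}$. Set $n = \lceil d/\lambda\rceil$, so $\lambda n \ge d$, and define padded points $\tilde{x}_i = (x_i, \mathbf{0}) \in \mathbb{R}^n$. The labels $z_i$ clearly depend only on the first $d \le \lambda n$ coordinates of $\tilde{x}_i$, so this is a valid $\mathbf{IntHS}_\lambda$ instance; realizability is preserved by the padded halfspaces $\tilde{H}_j = \{\tilde{x} \in \mathbb{R}^n : (r_j, \mathbf{0})^\top \tilde{x} \le c_j\}$. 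The reduction is clearly polynomial since $\lambda$ is a fixed constant, so $n = O(d)$.

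For the reverse direction, suppose a polynomial-time algorithm returns an intersection $\tilde{H}_1' \cap \tilde{H}_2'$ in $\mathbb{R}^n$ that correctly classifies a $\tfrac{1}{2} + \epsilon$ fraction of the padded points, where $\tilde{H}_j' = \{\tilde{x} : \tilde{r}_j^\top \tilde{x} \le \tilde{c}_j\}$. I would then project by keeping only the first $d$ coordinates of each normal: define $r_j' = \tilde{r}_j[1{:}d]$ (normalized if needed, scaling $\tilde{c}_j$ accordingly) and $H_j' = \{x \in \mathbb{R}^d : r_j'^\top x \le \tilde{c}_j\}$. Because $\tilde{r}_j^\top \tilde{x}_i = r_j'^\top x_i$ for every padded point, the classification of each $\tilde{x}_i$ by $\tilde{H}_1' \cap \tilde{H}_2'$ exactly matches the classification of $x_i$ by $H_1' \cap H_2'$. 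Hence we obtain an intersection of two halfspaces in $\mathbb{R}^d$ with the same accuracy $\tfrac{1}{2} + \epsilon$, contradicting Proposition \ref{prop ks} unless NP $=$ RP.

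There is no real obstacle here — the only subtlety is making sure the reduction goes in the right direction and that the projection truly preserves accuracy rather than just preserving feasibility, which the coordinate-wise argument above handles cleanly. A minor bookkeeping point is the use of $\lceil d/\lambda \rceil$ when $1/\lambda$ is not an integer, which only affects how many zero coordinates to pad and does not change anything substantive.
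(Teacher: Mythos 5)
Your proposal is correct and follows essentially the same reduction as the paper: pad an $\mathbf{IntHS}$ instance with zero coordinates to obtain an $\mathbf{IntHS}_\lambda$ instance in dimension $O(d/\lambda)$, then observe that any intersection of two half-spaces for the padded instance yields one of equal accuracy for the original. The only difference is that you spell out the back-projection of the returned half-spaces onto the first $d$ coordinates, a step the paper leaves implicit.
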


\begin{proof}
    We prove by reduction. Suppose for contradiction $\mathbf{IntHS_{\lambda}}$ can be learnt in polynomial time, then for each instance of $\mathbf{IntHS}$, we can create a new instance of $\mathbf{IntHS_{\lambda}}$ with dimension $\frac{n}{\lambda}$ by extension. In particular, each point $x\in \mathbb{R}^{\frac{n}{\lambda}}$ shares the same label as $x_{[1:n]}$ in the original $\mathbf{IntHS}$ instance. As a result, any classifier of $\mathbf{IntHS_{\lambda}}$ applies to the $\mathbf{IntHS}$ problem with the same accuracy, contradicting Proposition \ref{prop ks}.
\end{proof}

Now we are ready to state the learning problem $(\mathcal{X},\mathcal{Y},\mathcal{Z})$: $m$ data points $(x_i,y_i,z_i)$ are given, where $x_i,y_i\in \mathbb{R}^{3n}$ represent the two modalities and $z_i=\pm$ is the label. It's guaranteed that there is an intersection of two half-spaces that classifies all the points correctly, with supports of the defining unit vectors being the first $n$ coordinates. In other words, it's a realizable instance of $\mathbf{IntHS_{\frac{1}{3}}}$.

In particular, there are unit vectors $r_1,r_2\in \mathbb{R}^n$ and constants $c_1,c_2\in \mathbb{R}$ (unknown to the learner), such that all pairs $(x_i,z_i)$ can be perfectly classified by $\hat{H}_1\cap \hat{H}_2$, where $\hat{H}_i=(x|\hat{r}_i^{\top}x\le c_i)$ and $\hat{r}_i=(r_i,\mathbf{0}_{2n})$. Meanwhile, $y_i=Q(r_1,r_2)x_i$ holds for all data points, and all pairs $(y_i,z_i)$ can be perfectly classified by $\tilde{H}_1\cap \tilde{H}_2$, where $\tilde{H}_i=(x|\tilde{r}_i^{\top}x\le c_i)$ and $\tilde{r}_i=Q(r_1,r_2)(r_i,\mathbf{0}_{2n})$. 

Define the hypothesis set $\mathcal{S}$ as
$$
\mathcal{S}=\{h|h(x)=\textbf{sgn}(\min(c_1-r_1^{\top}x,c_2-r_2^{\top}x)), c_i\in \mathbb{R}, \|r_i\|_2=1\},
$$
which is exactly the set of all intersection of two half-spaces. We have the following results.

\begin{lemma} \label{lem xz}
    Properly learning $(\mathcal{X},\mathcal{Z})$ with $\mathcal{S}$ is NP-hard.
\end{lemma}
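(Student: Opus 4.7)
The plan is to give a direct polynomial-time reduction from $\mathbf{IntHS}_{1/3}$, which is NP-hard by \Cref{lem NP}, to the unimodal problem of properly learning $(\mathcal{X},\mathcal{Z})$ with hypothesis set $\mathcal{S}$.

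First, I would observe that the hypothesis set $\mathcal{S}$ is exactly the class of intersections of two half-spaces in $\mathbb{R}^{3n}$, with no constraint on the support of the defining unit vectors. In particular, $\mathcal{S}$ contains every classifier of the form $\hat{H}_1\cap\hat{H}_2$ with $\hat{r}_i=(r_i,\mathbf{0}_{2n})$, which is precisely the type of classifier that realizes an $\mathbf{IntHS}_{1/3}$ instance living in $\mathbb{R}^{3n}$. Second, I would verify that dropping the $y$-modality from the learning task $(\mathcal{X},\mathcal{Y},\mathcal{Z})$ yields a problem whose data $(x_i,z_i)$ together with the realizability guarantee is, by construction, a realizable $\mathbf{IntHS}_{1/3}$ instance in $\mathbb{R}^{3n}$.

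Given these two observations, the reduction is immediate. For an arbitrary realizable $\mathbf{IntHS}_{1/3}$ instance with points $\{x_i\}\subset\mathbb{R}^{3n}$ and labels $\{z_i\}$ whose optimal classifier is $H_1\cap H_2$ with unit normals $\hat{r}_i$ supported on the first $n$ coordinates, I would feed precisely this labeled dataset to a hypothetical proper learner for $(\mathcal{X},\mathcal{Z})$ with $\mathcal{S}$; the $y$ modality plays no role here because the learner for $(\mathcal{X},\mathcal{Z})$ does not see it. A proper hypothesis $h\in\mathcal{S}$ returned by the learner is itself an intersection of two half-spaces in $\mathbb{R}^{3n}$, and its training accuracy on $\{(x_i,z_i)\}$ is exactly its accuracy as a candidate solution to the original $\mathbf{IntHS}_{1/3}$ instance. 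Hence a polynomial-time algorithm achieving accuracy $\tfrac12+\epsilon$ for $(\mathcal{X},\mathcal{Z})$ would yield one for $\mathbf{IntHS}_{1/3}$, contradicting \Cref{lem NP}.

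There is no real obstacle here beyond bookkeeping; the only subtle point to check is that the realizability witness for the $(\mathcal{X},\mathcal{Z})$ problem (vectors $\hat{r}_i=(r_i,\mathbf{0}_{2n})$ lying in $\mathcal{S}$) genuinely comes from the underlying $\mathbf{IntHS}_{1/3}$ classifier, so that we are reducing to a legal, realizable instance of $(\mathcal{X},\mathcal{Z})$ rather than a harder agnostic one. Once this is observed, the NP-hardness of $(\mathcal{X},\mathcal{Z})$ follows directly from the NP-hardness of $\mathbf{IntHS}_{1/3}$.
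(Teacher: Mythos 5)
Your proof is correct and follows essentially the same route as the paper, which simply observes that $(\mathcal{X},\mathcal{Z})$ is by construction a realizable $\mathbf{IntHS}_{\frac{1}{3}}$ instance (in $\mathbb{R}^{3n}$, with labels depending on the first $n$ coordinates) and invokes Lemma \ref{lem NP}. You merely spell out the reduction and the exhaustiveness check that the paper leaves implicit for the $x$-modality (and makes explicit only for $(\mathcal{Y},\mathcal{Z})$ in Lemma \ref{lem yz}); this is fine and adds no new ideas.
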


\begin{proof}
    It is a direct consequence of Lemma \ref{lem NP}, noticing that $(\mathcal{X},\mathcal{Z})$ is an $\mathbf{IntHS_{\frac{1}{3}}}$ instance.
\end{proof}

\begin{lemma} \label{lem yz}
    Properly learning $(\mathcal{Y},\mathcal{Z})$ with $\mathcal{S}$ is NP-hard.
\end{lemma}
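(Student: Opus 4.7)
The plan is to mirror the proof of Lemma \ref{lem xz} by showing that $(\mathcal{Y},\mathcal{Z})$ is also a realizable $\mathbf{IntHS}_{1/3}$ instance in $\mathbb{R}^{3n}$, and then invoking Lemma \ref{lem NP}.

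The key computation is that the separating vector in $y$-space, $\tilde{r}_i = Q(r_1,r_2)(r_i,\mathbf{0}_{2n})$, simplifies to $(r_i,\mathbf{0}_{2n})$. This follows because $Q(r_1,r_2) = \diag(I_n, F(r_1,r_2))$ is block-diagonal and the input vector is zero on its last $2n$ coordinates, so the $F$ block contributes nothing while the $I_n$ block preserves $r_i$. Consequently the label $z_i$ depends only on $(y_i)_{[1:n]}$, which certifies $(\mathcal{Y},\mathcal{Z})$ as a realizable $\mathbf{IntHS}_{1/3}$ instance.

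For completeness I would then verify that this family of $(\mathcal{Y},\mathcal{Z})$ instances genuinely contains the hard $\mathbf{IntHS}_{1/3}$ instances produced by the reduction in Lemma \ref{lem NP} (zero-padded $\mathbf{IntHS}$ instances whose classifier is supported on the first $n$ coordinates). Given any such instance with classifier $(r_1,r_2,c_1,c_2)$ and points $y_i$, we can realize it as a $(\mathcal{Y},\mathcal{Z})$ instance by using the same $(r_1,r_2,c_1,c_2)$ as the underlying parameters and setting $x_i = Q(r_1,r_2)^\top y_i$; since $Q$'s top-left block is $I_n$, the first $n$ coordinates of $x_i$ and $y_i$ agree, so $\hat{r}_i = (r_i,\mathbf{0}_{2n})$ correctly labels the $x_i$'s with the same constants $c_i$. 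The main subtlety---and potential obstacle---is ensuring that the labels in $y$-space depend on only the first $n$ coordinates; the block-diagonal form of $Q$ with $I_n$ in the top-left is precisely what guarantees this, and once this structural fact is recorded, the reduction is essentially identical to that of Lemma \ref{lem xz}.
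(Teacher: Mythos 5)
Your proposal is correct and follows essentially the same route as the paper: first observe that the block-diagonal structure of $Q(r_1,r_2)$ fixes vectors supported on the first $n$ coordinates, so $(\mathcal{Y},\mathcal{Z})$ is a realizable $\mathbf{IntHS}_{1/3}$ instance, and then verify that the family of $(\mathcal{Y},\mathcal{Z})$ instances exhausts all such instances (the paper phrases this as surjectivity of the map induced by $Q$, you phrase it by exhibiting the preimage $x_i = Q(r_1,r_2)^{\top}y_i$) so that Lemma \ref{lem NP} applies. Your explicit construction of the preimage points is, if anything, slightly more careful than the paper's terse surjectivity argument, but it is the same idea.
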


\begin{proof}
    Although $(\mathcal{Y},\mathcal{Z})$ is also an $\mathbf{IntHS_{\frac{1}{3}}}$ instance, we still need to verify that $(\mathcal{Y},\mathcal{Z})$ exhausts all possible $\mathbf{IntHS_{\frac{1}{3}}}$ instances (otherwise we can't apply Lemma \ref{lem NP}, for example when all $(\mathcal{Y},\mathcal{Z})$ obey the same $\mathbf{IntHS_{\frac{1}{3}}}$ instance). Notice that $Q$ induces a mapping $H_1\cap H_2\to H_1\cap H_2$, and it's equivalent to proving it is a surjective mapping. For any $\mathbf{IntHS_{\frac{1}{3}}}$ instance $\hat{H}_1\cap \hat{H}_2$ where $\hat{H}_i=(x|\hat{r}_i^{\top}x\le c_i)$ and $\hat{r}_i=(r_i,\mathbf{0}_{2n})$, because $\hat{r}_i$ also has support in the first $n$ coordinates, we have that $ \hat{r}_i=Q(r_1,r_2)r_i$ with $r_i=\hat{r}_i$, proving the mapping is surjective.
\end{proof} 

\begin{lemma} \label{lem xyz}
    Assume $m\ge 3n$, $(\mathcal{X},\mathcal{Y},\mathcal{Z})$ is properly learnable with $\mathcal{S}$ (applied to $x$ only) in $O(mn^2)$ time, when there exist $3n$ data points with linearly-independent $x_i$.
\end{lemma}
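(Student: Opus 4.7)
The plan is to exploit the precondition that $3n$ linearly-independent $x_i$'s exist, together with their paired $y_i$'s, to algebraically recover the orthogonal matrix $Q(r_1,r_2)$, read off $r_1,r_2$ from its known block structure, and then determine $c_1,c_2$ by a greedy maximization over the positively-labeled training points.

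First, I would run Gaussian elimination on the $m\times 3n$ matrix stacking all $x_i$'s to identify a subset of $3n$ linearly-independent ones; this takes $O(mn^2)$ time. Assemble the chosen $x_i$'s and their paired $y_i$'s as the columns of two matrices $X,Y\in\mathbb{R}^{3n\times 3n}$. Since $y_i = Q(r_1,r_2)\,x_i$ holds for every data point by construction of the problem, we have $Y=Q(r_1,r_2)\,X$, and because $X$ is invertible we recover $Q(r_1,r_2)=YX^{-1}$ in $O(n^3)$ time. By the explicit block form of $Q(v_1,v_2)$ given in the setup, the $(n+1)$-th column of the recovered matrix equals $(\mathbf{0}_n,\,r_1/\sqrt{2},\,r_2/\sqrt{2})^{\top}$, so $r_1$ and $r_2$ can be read off and rescaled.

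With $r_1,r_2$ in hand, let $\hat{r}_i=(r_i,\mathbf{0}_{2n})\in\mathbb{R}^{3n}$ and set $c_i \eqdef \max\{\,\hat{r}_i^{\top}x_j : z_j=+\,\}$ for $i=1,2$, which costs $O(mn)$ time. I would output the hypothesis $h\in\mathcal{S}$ determined by $\hat{r}_1,\hat{r}_2$ and these $c_i$. To argue zero training error: by realizability there exist thresholds $c_1^{\star},c_2^{\star}$ for which the intersection defined by $\hat{r}_1,\hat{r}_2,c_1^{\star},c_2^{\star}$ classifies every training point correctly. Every positive point then satisfies $\hat{r}_i^{\top}x_j \le c_i^{\star}$, so $c_i \le c_i^{\star}$; that is, the intersection we output is \emph{contained} in the true one. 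Positive points lie in our intersection by construction, and negative points, being outside the larger true intersection, are automatically outside the smaller one we output.

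Summing costs—$O(mn^2)$ for the rank computation, $O(n^3)$ for the linear-algebraic recovery of $Q$, and $O(mn)$ for the inner products and the two maxima—gives $O(mn^2)$ under $m\ge 3n$, matching the claim. I do not foresee a serious obstacle: the linear-algebraic decoding of $r_1,r_2$ is forced by the identity $Y=Q(r_1,r_2)X$ on an invertible $X$, and the only nontrivial point is the correctness of the greedy thresholds, which rests on the monotonicity/containment observation above combined with the fact that the defining vectors have support in the first $n$ coordinates, so the problem genuinely reduces to low-dimensional $\mathbf{IntHS}$.
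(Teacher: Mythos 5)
Your proposal is correct and follows essentially the same route as the paper: recover $Q(r_1,r_2)$ by solving the linear system $Y=QX$ on $3n$ linearly independent pairs, read off $r_1,r_2$ from the known block structure, set each $c_i$ as the maximum of $\hat r_i^{\top}x_j$ over positive points, and argue zero training error via the containment of your intersection inside the realizing one. If anything, your step of finding the independent subset by Gaussian elimination is cleaner than the paper's algorithm, which tests \emph{orthogonality} rather than linear independence when building the set $S$.
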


\begin{proof}
    Consider the simple algorithm \ref{alg} which consists three steps: 
    \begin{enumerate}
        \item find a set $S$ of linearly-independent $x_i$ (line 2-6).
        \item find $Q$ by solving a linear system of $S$ (line 7-8).
        \item rank $x_i$ along the directions of $r_1,r_2$ to get $c_1,c_2$ (line 9-10).
    \end{enumerate}
    Step 1 runs in $O(mn^2)$ time, since testing orthogonality between two points runs in $O(n)$ time and $|S|=O(n)$. Step 2 runs in $O(n^3)$ time which is the complexity of solving a system of linear equations. Step 3 runs in $O(mn)$ time. Under our assumption $m\ge 3n$, the total running time is $O(mn^2+n^3+mn)=O(mn^2)$.

    We still need to verify the found classifier $h(x)$:
    $$
    h(x)=\textbf{sgn}(\min(c_1-r_1^{\top}x,c_2-r_2^{\top}x))
    $$
    does classify all data points correctly. By the construction of $Q$, we know there is a classifier $h^*(x)$ which classifies all data points correctly, which shares the same $r_i$ with $h(x)$:
    $$
    h^*(x)=\textbf{sgn}(\min(c^*_1-r_1^{\top}x,c^*_2-r_2^{\top}x)).
    $$
    By the choice of $c_1,c_2$, we have that $c_1\le c^*_1, c_2\le c^*_2$. Denote $h_{+}=\{x\in \mathbb{R}^{3n}, h(x)=+\}$, we have that
    $$
    (h_{+}\cap X) \subset (h^*_{+}\cap X)=X_+,
    $$
    by the fact $h_{+}\subset h^*_{+}$. Meanwhile, by the construction of $h(x)$, we have that $X_{+}\subset h_{+}$, and further
    $$
    X_{+}=(X_{+}\cap X)\subset (h_{+}\cap X).
    $$
    As a result, $X_{+}=h_{+}\cap X$ which means $h(x)$ does classify all data points correctly.
\end{proof}

\begin{algorithm}
\caption{Learning by decoding}
\label{alg}
\begin{algorithmic}[1]
\STATE Input: $m$ data points $(x_i,y_i,z_i)$. 
\STATE Set $S=\{x_1\}$, $t=2$.
\WHILE{$|S|<3n$}
\STATE If $x_t$ is orthogonal to each member of $S$, add $x_t$ to $S$.
\STATE $t=t+1$.
\ENDWHILE
\STATE Solving the linear system $Qx_i=y_i$, $\forall x_i\in S$.
\STATE Recover $r_1,r_2$ from $Q$.
\STATE Let $X_{+}$ be the set of all $x_i$ with $z_i=+$.
\STATE Set $c_i=\max_{x\in X_{+}} r_i^{\top}x$.
\end{algorithmic}
\end{algorithm}

Lemma \ref{lem xyz} concerns only the learnability on the training data, to extend this result to PAC-learnability we introduce the following definition.

\begin{definition}
    A data distribution $D$ on $(\mathcal{X},\mathcal{Y},\mathcal{Z})$ is called non-degenerate, if 
    $$
    \mathbb{P}_{(x_i,y_i,z_i)\sim D, i\in [3n]}(\exists \lambda\ne \mathbf{0}, s.t. \sum_{i=1}^{3n} \lambda_i x_i=0)=0.
    $$
\end{definition}
Most distributions whose support has non-zero measure are non-degenerate, including common uniform and Gaussian distributions. We have the following result for PAC-learnability.

\begin{lemma} \label{lem xyzpac}
    Assume $m$ data points are sampled from a non-degenerate distribution $D$ and $m\ge 3n$, $(\mathcal{X},\mathcal{Y},\mathcal{Z})$ is properly PAC-learnable with $\mathcal{S}$ (applied to $x$ only) in $O(mn^2)$ time. In particular, with probability at least $1-\delta$, the generalization error $\epsilon$ of algorithm \ref{alg} is upper bounded by 
    $$
    \epsilon=O\left(\sqrt{\frac{n\log m+\log \frac{1}{\delta}}{m}}\right).
    $$
\end{lemma}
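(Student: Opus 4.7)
The plan is to combine the zero-empirical-risk guarantee of Lemma \ref{lem xyz} with a standard VC-dimension based uniform convergence bound. Because Lemma \ref{lem xyz} already yields a polynomial-time procedure that attains $\ell_{\text{emp}} = 0$ on the training sample whenever the sample contains $3n$ linearly independent $x_i$'s, PAC-learnability reduces to (i) verifying that this precondition holds almost surely under a non-degenerate $D$, and (ii) controlling the gap between empirical and population risk via uniform convergence.

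For step (i), I would observe that since $m \ge 3n$, the first $3n$ samples $x_1,\ldots,x_{3n}$ are i.i.d.\ draws from $D$, and by the very definition of a non-degenerate distribution the event that they are linearly dependent has probability zero. Hence, with probability one, Algorithm \ref{alg} finds a valid $S$, recovers $Q$, and (by Lemma \ref{lem xyz}) outputs a hypothesis $h \in \mathcal{S}$ achieving $\ell_{\text{emp}}(h) = 0$ in $O(mn^2)$ time. The running time bound is inherited directly from Lemma \ref{lem xyz}.

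For step (ii), I would invoke the classical uniform convergence bound for binary loss on a VC class. The hypothesis class $\mathcal{S}$ consists of intersections of two half-spaces in $\mathbb{R}^{3n}$; a single half-space in $\mathbb{R}^{3n}$ has VC dimension $3n+1$, and the intersection of a constant number of half-spaces has VC dimension $O(n)$. Applying the standard VC generalization inequality gives, with probability at least $1-\delta$,
$$
\sup_{h \in \mathcal{S}} \big|\ell_{\text{pop}}(h) - \ell_{\text{emp}}(h)\big| \eq O\!\left(\sqrt{\frac{n \log m + \log(1/\delta)}{m}}\right).
$$
Combining with $\ell_{\text{emp}}(h) = 0$ and a union bound over the measure-zero degeneracy event produces the stated population-risk bound.

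There is no real obstacle in this proof: the only delicate point is the non-degeneracy argument, but it is immediate from the definition, and both the VC-dimension estimate for intersections of two half-spaces and the uniform convergence inequality are applied as textbook black boxes. The main conceptual work has already been done in Lemma \ref{lem xyz}; Lemma \ref{lem xyzpac} is essentially a PAC wrapper around it.
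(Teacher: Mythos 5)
Your proposal is correct and follows essentially the same route as the paper: non-degeneracy guarantees (with probability one) the existence of $3n$ linearly independent $x_i$, Lemma \ref{lem xyz} then yields a consistent hypothesis in $O(mn^2)$ time, and the classical VC generalization bound for intersections of two half-spaces (VC dimension $O(n)$) gives the stated rate. No substantive differences from the paper's argument.
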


\begin{proof}
    By the assumption that $D$ is non-degenerate, we have that with probability 1, there exist $3n$ data points with linearly-independent $x_i$. By the conclusion of Lemma \ref{lem xyz}, the learnt classifier achieves zero loss on training data.

    From classic statistical learning theory, the generalization error of such classifier can be characterized by the VC-dimension of the hypothesis class.

    \begin{theorem}[\cite{vapnik2015uniform}]
        With probability at least $1-\delta$, for every $h$ in the hypothesis class $\mathcal{H}$, if $h$ is consistent with $m$ training samples, the generalization error $\epsilon$ of $h$ is upper bounded by 
    $$
    \epsilon=O\left(\sqrt{\frac{d\log m+\log \frac{1}{\delta}}{m}}\right),
    $$
    where $d$ denotes the VC-dimension of $\mathcal{H}$.
    \end{theorem}
    We only need to determine the VC-dimension of the class of intersection of two half-spaces in $\mathbb{R}^{3n}$. It's well known the VC-dimension of a single half-space is $O(n)$. \cite{blumer1989learnability} shows that the $k$-fold intersection of any VC-class has VC-dimension bounded by $O(d k \log k)$. Putting $d=n$ and $k=2$ concludes the proof.
\end{proof}

\section{Separation in Improper Learning}
As an extension of our result in proper learning, we consider the problem whether multimodality still possesses such exponential computational benefit when the learner is allowed to output arbitrary hypothesis beyond the hypothesis set $\mathcal{H}$, i.e. the improper learning setting.

The general problem of learning intersections of halfspaces is known to be hard even in the improper learning setting, defined as below.

\begin{definition}[Intersection of half-spaces]
    An instance of $\mathbf{IntHS(N)}$ is a set of points in $\mathbb{R}^n$ each labeled either ‘+’ or ‘-’ and the goal is to find an intersection of $N$ number of half-spaces which correctly classifies the maximum number of points, where a ‘+’ point is classified correctly if it lies inside the intersection and a ‘-’ point is classified correctly if it lies outside of it.
\end{definition}

We will rely on the following hardness of improper learning intersections of halfspaces.

\begin{theorem}\label{daniel}[\cite{daniely2014average,daniely2021local}]
    If $\lim_{n\to \infty} q(n)=\infty$ is a super-constant, there is no efficient algorithm that improperly learns $q(n)$ numbers of intersections of halfspaces in $R^n$.
\end{theorem}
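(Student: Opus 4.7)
The plan is to follow the template established by \cite{daniely2014average, daniely2021local} and reduce from the average-case hardness of refuting random constraint satisfaction problems. The operative conjecture is that no polynomial-time algorithm can distinguish a planted satisfiable instance of random $k$-SAT at a suitable superlinear clause density from a truly random instance (which is, with high probability, unsatisfiable). Since Theorem \ref{thm ks} only yields worst-case NP-hardness for a fixed number of linear threshold functions, it cannot rule out polynomial-time algorithms that output arbitrary circuits, so some stronger, average-case assumption is unavoidable if we want hardness of truly improper learning.

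First, I would design an encoding that takes any random $k$-CSP $\Phi$ on $n$ variables and produces a distribution $D_\Phi$ over labeled examples in some $\mathbb{R}^N$. The coordinates of an example encode which literals occur in a random clause and with which signs, and the label is determined by the clause. A variable assignment $\sigma^{\star}$ then induces a concept that can be written as an intersection of a modest number of halfspaces per clause, aggregating to $q(n)$ halfspaces overall (with the arity $k=k(n)$ chosen so that $q(n) \to \infty$). In the planted case, $\sigma^{\star}$ actually satisfies $\Phi$, so $D_\Phi$ is realizable by this intersection of $q(n)$ halfspaces; in the purely random case, a standard second-moment / expansion argument on random CSPs forces the label to behave like a pseudorandom XOR of a few input coordinates.

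Next, I would use a hypothetical efficient improper learner $\mathcal{A}$ for intersections of $q(n)$ halfspaces to build a distinguisher between planted and random $\Phi$. On planted instances, realizability plus standard uniform-convergence yields a hypothesis whose empirical accuracy on a held-out sample from $D_\Phi$ is bounded well above $1/2$. On random instances, the pseudorandomness of labels forces \emph{any} polynomial-time predictor to achieve accuracy $1/2 + o(1)$ on a held-out sample. Comparing these two held-out accuracies produces a polynomial-time refuter for random $k$-SAT, contradicting the refutation-hardness assumption and thereby ruling out $\mathcal{A}$.

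The main obstacle is the pseudorandomness step on the random side: one must argue that no polynomial-time algorithm at all — not merely one restricted to halfspace intersections — can predict the label of a fresh example better than chance. This is precisely where \cite{daniely2014average, daniely2021local} invoke Fourier-analytic and noise-sensitivity techniques in conjunction with the refutation-hardness assumption, and it is also what forces $q(n)$ to be superconstant rather than fixed: the arity $k$ of the CSP, and hence the number of halfspaces needed to express a single clause's effect, must grow with $n$ so that each clause contributes enough pseudorandomness to defeat an arbitrary efficient predictor. Once this step is in hand, the remainder of the argument is essentially a mechanical packaging of pseudorandomness into a learning lower bound.
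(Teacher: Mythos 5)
The paper offers no proof of this statement: it is imported verbatim as a citation to \cite{daniely2014average,daniely2021local}, so there is no in-paper argument to compare yours against. Your sketch is a reasonable reconstruction of how those works actually proceed --- a reduction from the average-case hardness of refuting random CSPs (or, in the 2021 work, from local pseudorandom generators), where a planted assignment yields a distribution realizable by an intersection of $q(n)$ halfspaces, a truly random instance yields pseudorandom labels that no efficient predictor can beat, and an efficient improper learner would therefore serve as a distinguisher. You also correctly identify the two points that matter most: the hardness is conditional on an average-case assumption rather than on $\mathrm{NP}\neq\mathrm{RP}$ (which is why Theorem \ref{thm ks} cannot be used here, and why the paper's later description of the improper unimodal problems as ``NP-hard'' in Theorem \ref{main im} is looser than what the cited results actually deliver), and the superconstant growth of $q(n)$ is forced by the need for enough per-clause pseudorandomness to defeat arbitrary efficient predictors. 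Your outline remains a high-level sketch --- the encoding of clauses as halfspaces and the Fourier-analytic pseudorandomness step are asserted rather than carried out --- but as a blind account of the cited theorem's provenance it is accurate and does not conflict with anything in the paper.
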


Using a similar analysis to Theorem \ref{main}, we obtain the following separation result.

\begin{theorem}[Improper computational separation]\label{main im}
    There exists a multimodal learning problem $(\mathcal{X},\mathcal{Y},\mathcal{Z})$ which is PAC-learnable in polynomial time, while both unimodal learning problems $(\mathcal{X},\mathcal{Z})$, $(\mathcal{Y},\mathcal{Z})$ are NP-hard in the improper learning setting, even if there is a bijective mapping $f: \mathcal{X}\to \mathcal{Y}$ such that $y=f(x), \forall (x,y,z)\sim (\mathcal{X},\mathcal{Y},\mathcal{Z})$.
\end{theorem}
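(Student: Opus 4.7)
The plan is to generalize the construction of Theorem~\ref{main} from intersections of two halfspaces to intersections of $N=q(n)$ halfspaces, where $q$ is a slowly growing super-constant function such as $q(n)=\log\log n$, chosen so that Theorem~\ref{daniel} can be invoked. For unit vectors $r_1,\dots,r_N\in\mathbb{R}^n$, the stacked vector $(r_1/\sqrt{N},\dots,r_N/\sqrt{N})\in\mathbb{R}^{Nn}$ has unit norm, so I can complete it to an orthogonal matrix $F(r_1,\dots,r_N)\in\mathbb{R}^{Nn\times Nn}$ and define the encoder
$$
Q(r_1,\dots,r_N)=\begin{pmatrix} I_n & 0 \\ 0 & F(r_1,\dots,r_N) \end{pmatrix}\in\mathbb{R}^{(N+1)n\times (N+1)n}.
$$
The multimodal instance then sets $y_i=Q(r_1,\dots,r_N)x_i$, with labels given by the intersection of $N$ halfspaces defined by the padded vectors $\hat r_j=(r_j,\mathbf{0}_{Nn})$ and thresholds $c_j\in\mathbb{R}$.

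For unimodal hardness, I would first define the $N$-halfspace analog of Definition~\ref{defi} and verify, via the same zero-padding reduction as in Lemma~\ref{lem NP}, that improperly learning this low-dimensional variant is as hard as improperly learning $\mathbf{IntHS}(N)$ in dimension equal to the $\lambda$-fraction of the ambient dimension. Applied with $\lambda=1/(N+1)$, this gives improper hardness for $(\mathcal{X},\mathcal{Z})$ because $N(n)=q(n)$ remains super-constant when re-expressed as a function of the enlarged dimension $m'=(N+1)n$. For $(\mathcal{Y},\mathcal{Z})$, the same argument applies once I verify, exactly as in Lemma~\ref{lem yz}, that the map induced by $Q$ is surjective onto the class of instances whose defining vectors are supported on the first $n$ coordinates: since $Q(r_1,\dots,r_N)$ fixes those coordinates, $Q(r_1,\dots,r_N)\hat r_j=\hat r_j$ and every target padded instance is realized by choosing the $r_j$'s appropriately.

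For the multimodal algorithm, Algorithm~\ref{alg} lifts essentially verbatim after replacing $3n$ by $(N+1)n$: I collect $(N+1)n$ data points with linearly independent $x_i$'s, recover $Q$ by solving the linear system $Qx_i=y_i$, read off $r_1,\dots,r_N$ from the first column of the bottom-right block, and set $c_j=\max_{x\in X_+}r_j^\top x$. The correctness reasoning of Lemma~\ref{lem xyz} ($h_+\subset h_+^*$ combined with $X_+\subset h_+$ forces $X_+=h_+\cap X$) transfers with no change beyond indexing over $j=1,\dots,N$, and the total running time is polynomial in $m$, $n$, and $N$. PAC generalization then follows from the same non-degeneracy hypothesis as in Lemma~\ref{lem xyzpac} together with the $O(nN\log N)$ VC-dimension bound for $N$-fold intersections of halfspaces from \cite{blumer1989learnability}.

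The main obstacle is ensuring that Theorem~\ref{daniel}'s super-constant hypothesis survives the dimension lift, since our unimodal problem lives in dimension $m'=(N+1)n$ rather than $n$. The key observation is that for any $q$ that grows sublinearly, $q(n)$ re-expressed as a function of $m'$ remains super-constant: with $q(n)=\log\log n$ we have $n\asymp m'/\log\log m'$ and $q(n)\asymp \log\log m'\to\infty$, so the reduction described above really does reduce from a problem Theorem~\ref{daniel} certifies as hard. Once this point is settled, all remaining pieces are mechanical generalizations of the $N=2$ proof in Section~4.
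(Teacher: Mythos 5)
Your proposal is correct and follows essentially the same route as the paper: generalize the two-halfspace encoding to an intersection of super-constantly many halfspaces, pad the dimension so the stacked defining vectors form a unit-norm column of an orthogonal block, invoke Theorem~\ref{daniel} through a zero-padding reduction for both unimodal problems, and reuse the decoding algorithm together with the $h_{+}\subset h^*_{+}$ and $X_{+}\subset h_{+}$ correctness argument. The only difference is parameterization---the paper keeps the ambient dimension at $n$ and uses $\sqrt{n}-1$ halfspaces supported on the first $\sqrt{n}$ coordinates (hence its strengthened hardness lemma with $\lambda=1/\poly(n)$), whereas you fix the support at $n$ coordinates and grow the ambient dimension to $(N+1)n$ with $N=\log\log n$, which makes the super-constant bookkeeping marginally easier but is otherwise the same construction.
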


\begin{proof}
    The proof is identical to that of Theorem \ref{main} except for two minor places:

    \begin{enumerate}
        \item We need a strengthened version of Lemma \ref{lem NP} with $\lambda$ being $1/\textbf{poly}(n)$.
        \item The hard instance construction and the algorithm of multimodal learning is slightly modified to accommodate the new Lemma.
    \end{enumerate}
    
    We begin with the strengthened version of Lemma \ref{lem NP}. The definition of $\mathbf{IntHS}(N)_{\lambda}$ follows directly from Definition \ref{defi}, and we won't repeat here.

    \begin{lemma}\label{lem NP new}
    Given any super-constant $q(n)$. For all constants $C\ge 1,c>0$, improperly learning $\mathbf{IntHS}(q(n))_{\frac{1}{C n^c}}$ is NP-hard.
    \end{lemma}

\begin{proof}
    We prove by reduction. Suppose for contradiction $\mathbf{IntHS}(q(n))_{\frac{1}{C n^c}}$ can be learnt in polynomial time, then for each instance of $\mathbf{IntHS}(q(n))$, we create a new instance of $\mathbf{IntHS}(q'(C n^{c+1}))_{\frac{1}{C n^c}}$ with dimension $C n^{c+1}$ by extension, where $q'(C n^{c+1})=q(n)$ is still a super-constant. In particular, each point $x\in \mathbb{R}^{C n^{c+1}}$ shares the same label as $x_{[1:n]}$ in the original $\mathbf{IntHS}(q(n))$ instance. Since and polynomial of $Cn^c$ is also a polynomial of $n$, we conclude that any classifier of $\mathbf{IntHS}(q(n))_{\frac{1}{C n^c}}$ applies to the $\mathbf{IntHS}(q(n))$ problem with the same accuracy, contradicting Theorem \ref{daniel}.
\end{proof}

Specifically, we will only use Lemma \ref{lem NP new} with $C=1, c=1/2$. Now we are ready to state the learning problem $(\mathcal{X},\mathcal{Y},\mathcal{Z})$: $m$ data points $(x_i,y_i,z_i)$ are given, where $x_i,y_i\in \mathbb{R}^{n}$ represent the two modalities and $z_i=\pm$ is the label. It's guaranteed that there is an intersection of $\sqrt{n}-1$ number of half-spaces $H_1,H_2,\cdots,H_{\sqrt{n}-1}$ that classifies all the points correctly, with supports of the defining unit vectors being the first $\sqrt{n}$ coordinates. In other words, it's a realizable instance of $\mathbf{IntHS}(\sqrt{n}-1)_{1/\sqrt{n}}$ (with $q(x)=x-1$).

In particular, there are unit vectors $r_1,r_2,\cdots,r_{\sqrt{n}-1}\in\mathbb{R}^{\sqrt{n}}$ and constants $c_1,c_2,\cdots,c_{\sqrt{n}-1}\in \mathbb{R}$ (unknown to the learner), such that all pairs $(x_i,z_i)$ can be perfectly classified by $\cap_i \hat{H}_i $, where $\hat{H}_i=(x|\hat{r}_i^{\top}x\le c_i)$ and $\hat{r}_i=(r_i,\mathbf{0}_{n-\sqrt{n}})$. Similarly, we can define the $Q$ matrix as
$$
Q(v_1,v_2,\cdots,v_{\sqrt{n}-1})=\begin{pmatrix}
I_{\sqrt{n}} & 0\\
0 & F(v_1,v_2,\cdots,v_{\sqrt{n}-1})
\end{pmatrix},
$$
where the function $F(v_1,v_2,\cdots,v_{\sqrt{n}-1}):\mathbb{R}^{n-\sqrt{n}}\to \mathbb{R}^{(n-\sqrt{n})\times (n-\sqrt{n})}$ is chosen as below:
$$
F(v_1,v_2,\cdots,v_{\sqrt{n}-1})=\begin{pmatrix}
\frac{v_1}{\sqrt{\sqrt{n}-1}} & \cdots\\
\frac{v_2}{\sqrt{\sqrt{n}-1}} & \cdots\\
\cdots & \cdots\\
\frac{v_{\sqrt{n}-1}}{\sqrt{\sqrt{n}-1}} & \cdots
\end{pmatrix}.
$$

Meanwhile, $y_i=Q(v_1,v_2,\cdots,v_{\sqrt{n}-1})x_i$ holds for all data points, and all pairs $(y_i,z_i)$ can be perfectly classified by $\cap_i \tilde{H}_i$, where $\tilde{H}_i=(x|\tilde{r}_i^{\top}x\le c_i)$ and $\tilde{r}_i=Q(v_1,v_2,\cdots,v_{\sqrt{n}-1})(r_i,\mathbf{0}_{n-\sqrt{n}})$. 

Via the same argument as Theorem \ref{main}, according to Lemma \ref{lem NP new}, both improperly learning $(\mathcal{X},\mathcal{Z})$ and improperly learning $(\mathcal{Y},\mathcal{Z})$ are hard. 

We only need to show $(\mathcal{X},\mathcal{Y},\mathcal{Z})$ can be learnt efficiently. The same algorithm will be applied to decode all the $r_i$ and $c_i$ is set as $\max_{x\in X_{+}} r_i^{\top}x$. The classifier we use is still
$$
    h(x)=\textbf{sgn}(\min(c_1-r_1^{\top}x,c_2-r_2^{\top}x,\cdots,c_{\sqrt{n}-1}-r_{\sqrt{n}-1}^{\top}x)).
$$
 By the construction of $Q$, we know there is a classifier $h^*(x)$ which classifies all data points correctly, which shares the same $r_i$ with $h(x)$:
    $$
    h^*(x)=\textbf{sgn}(\min(c^*_1-r_1^{\top}x,c^*_2-r_2^{\top}x,\cdots,c^*_{\sqrt{n}-1}-r_{\sqrt{n}-1}^{\top}x)).
    $$
    By the choice of $c_i$, we have that $c_i\le c^*_i, \forall i$. Denote $h_{+}=\{x\in \mathbb{R}^{n}, h(x)=+\}$, we have that
    $$
    (h_{+}\cap X) \subset (h^*_{+}\cap X)=X_+,
    $$
    by the fact $h_{+}\subset h^*_{+}$. Meanwhile, by the construction of $h(x)$, we have that $X_{+}\subset h_{+}$, and further
    $$
    X_{+}=(X_{+}\cap X)\subset (h_{+}\cap X).
    $$
    As a result, $X_{+}=h_{+}\cap X$ which means $h(x)$ does classify all data points correctly. The rest of the proof on PAC learning easily follows from Theorem \ref{main} and we omit it here.

\end{proof}

\section{Conclusion}
In this paper, we take a preliminary step towards unraveling the computational benefit of multimodal learning. We demonstrate an exponential separation in computation between multimodal and unimodal learning by constructing a variant of the intersection of two half-spaces
problem, which is NP-hard for any unimodal algorithm but can be efficiently solved by a multimodal algorithm. Complementing the statistical merits of multimodal learning as shown in \cite{zhoul}, our result provides a more comprehensive theoretical understanding of the power of multimodal learning.

The main limitation of this work, in our opinion, is on the contrived argument that multimodal learning is tractable: the efficient learning scheme provided in this work only succeeds on a narrow, intricately designed class of problem instances. These results alone are not enough to suggest that computational benefits of multimodal learning are a more general phenomenon. 

We conclude with two questions as future directions to improve the preliminary results presented in this work. 

\begin{enumerate}
    \item Can we show such separation in computation for more natural learning problems? Ideally, a good efficient learning algorithm for the multimodal setting should have less dependence on the problem structure, such as ERM.
    \item Can we obtain a general sufficient condition for the computational benefit of multimodal learning? Even a polynomial improvement is interesting.
\end{enumerate}

\bibliography{Xbib}

\begin{thebibliography}{10}

\bibitem{amini2009learning}
Massih~R Amini, Nicolas Usunier, and Cyril Goutte.
\newblock Learning from multiple partially observed views-an application to multilingual text categorization.
\newblock {\em Advances in neural information processing systems}, 22, 2009.

\bibitem{blumer1989learnability}
Anselm Blumer, Andrzej Ehrenfeucht, David Haussler, and Manfred~K Warmuth.
\newblock Learnability and the vapnik-chervonenkis dimension.
\newblock {\em Journal of the ACM (JACM)}, 36(4):929--965, 1989.

\bibitem{chang2015text}
Angel Chang, Will Monroe, Manolis Savva, Christopher Potts, and Christopher~D Manning.
\newblock Text to 3d scene generation with rich lexical grounding.
\newblock {\em arXiv preprint arXiv:1505.06289}, 2015.

\bibitem{daniely2014average}
Amit Daniely, Nati Linial, and Shai Shalev-Shwartz.
\newblock From average case complexity to improper learning complexity.
\newblock In {\em Proceedings of the forty-sixth annual ACM symposium on Theory of computing}, pages 441--448, 2014.

\bibitem{daniely2021local}
Amit Daniely and Gal Vardi.
\newblock From local pseudorandom generators to hardness of learning.
\newblock In {\em Conference on Learning Theory}, pages 1358--1394. PMLR, 2021.

\bibitem{evangelopoulos2013multimodal}
Georgios Evangelopoulos, Athanasia Zlatintsi, Alexandros Potamianos, Petros Maragos, Konstantinos Rapantzikos, Georgios Skoumas, and Yannis Avrithis.
\newblock Multimodal saliency and fusion for movie summarization based on aural, visual, and textual attention.
\newblock {\em IEEE Transactions on Multimedia}, 15(7):1553--1568, 2013.

\bibitem{federici2020learning}
Marco Federici, Anjan Dutta, Patrick Forr{\'e}, Nate Kushman, and Zeynep Akata.
\newblock Learning robust representations via multi-view information bottleneck.
\newblock {\em arXiv preprint arXiv:2002.07017}, 2020.

\bibitem{hodosh2013framing}
Micah Hodosh, Peter Young, and Julia Hockenmaier.
\newblock Framing image description as a ranking task: Data, models and evaluation metrics.
\newblock {\em Journal of Artificial Intelligence Research}, 47:853--899, 2013.

\bibitem{huang2021makes}
Yu~Huang, Chenzhuang Du, Zihui Xue, Xuanyao Chen, Hang Zhao, and Longbo Huang.
\newblock What makes multi-modal learning better than single (provably).
\newblock {\em Advances in Neural Information Processing Systems}, 34:10944--10956, 2021.

\bibitem{khot2008hardness}
Subhash Khot and Rishi Saket.
\newblock On hardness of learning intersection of two halfspaces.
\newblock In {\em Proceedings of the fortieth annual ACM symposium on Theory of computing}, pages 345--354, 2008.

\bibitem{liang2023quantifying}
Paul~Pu Liang, Yun Cheng, Xiang Fan, Chun~Kai Ling, Suzanne Nie, Richard Chen, Zihao Deng, Faisal Mahmood, Ruslan Salakhutdinov, and Louis-Philippe Morency.
\newblock Quantifying \& modeling feature interactions: An information decomposition framework.
\newblock {\em arXiv preprint arXiv:2302.12247}, 2023.

\bibitem{lienhart1998comparison}
Rainer~W Lienhart.
\newblock Comparison of automatic shot boundary detection algorithms.
\newblock In {\em Storage and retrieval for image and video databases VII}, volume 3656, pages 290--301. SPIE, 1998.

\bibitem{zhoul}
Zhou Lu.
\newblock A theory of multimodal learning.
\newblock {\em arXiv preprint arXiv:2309.12458}, 2023.

\bibitem{ngiam2011multimodal}
Jiquan Ngiam, Aditya Khosla, Mingyu Kim, Juhan Nam, Honglak Lee, and Andrew~Y Ng.
\newblock Multimodal deep learning.
\newblock In {\em Proceedings of the 28th international conference on machine learning (ICML-11)}, pages 689--696, 2011.

\bibitem{openai2023gpt}
OpenAI.
\newblock Gpt-4 technical report.
\newblock {\em arXiv}, 2023.

\bibitem{reed2016generative}
Scott Reed, Zeynep Akata, Xinchen Yan, Lajanugen Logeswaran, Bernt Schiele, and Honglak Lee.
\newblock Generative adversarial text to image synthesis.
\newblock In {\em International conference on machine learning}, pages 1060--1069. PMLR, 2016.

\bibitem{reed2022generalist}
Scott Reed, Konrad Zolna, Emilio Parisotto, Sergio~Gomez Colmenarejo, Alexander Novikov, Gabriel Barth-Maron, Mai Gimenez, Yury Sulsky, Jackie Kay, Jost~Tobias Springenberg, et~al.
\newblock A generalist agent.
\newblock {\em arXiv preprint arXiv:2205.06175}, 2022.

\bibitem{ren2023importance}
Yunwei Ren and Yuanzhi Li.
\newblock On the importance of contrastive loss in multimodal learning.
\newblock {\em arXiv preprint arXiv:2304.03717}, 2023.

\bibitem{sridharan2008information}
Karthik Sridharan and Sham~M Kakade.
\newblock An information theoretic framework for multi-view learning.
\newblock 2008.

\bibitem{sun2020tcgm}
Xinwei Sun, Yilun Xu, Peng Cao, Yuqing Kong, Lingjing Hu, Shanghang Zhang, and Yizhou Wang.
\newblock Tcgm: An information-theoretic framework for semi-supervised multi-modality learning.
\newblock In {\em Computer Vision--ECCV 2020: 16th European Conference, Glasgow, UK, August 23--28, 2020, Proceedings, Part III 16}, pages 171--188. Springer, 2020.

\bibitem{vapnik2015uniform}
Vladimir~N Vapnik and A~Ya Chervonenkis.
\newblock On the uniform convergence of relative frequencies of events to their probabilities.
\newblock In {\em Measures of complexity: festschrift for alexey chervonenkis}, pages 11--30. Springer, 2015.

\bibitem{yuhas1989integration}
Ben~P Yuhas, Moise~H Goldstein, and Terrence~J Sejnowski.
\newblock Integration of acoustic and visual speech signals using neural networks.
\newblock {\em IEEE Communications Magazine}, 27(11):65--71, 1989.

\bibitem{zhang2019cpm}
Changqing Zhang, Zongbo Han, Huazhu Fu, Joey~Tianyi Zhou, Qinghua Hu, et~al.
\newblock Cpm-nets: Cross partial multi-view networks.
\newblock {\em Advances in Neural Information Processing Systems}, 32, 2019.

\end{thebibliography}
\bibliographystyle{plain}

\end{document}